\documentclass[10pt, conference]{IEEEtran}
\IEEEoverridecommandlockouts
\usepackage{cite}
\usepackage{amsmath,amssymb,amsfonts}
\usepackage{amsthm}
\usepackage{graphicx}
\usepackage[caption=false,font=footnotesize]{subfig}
\usepackage{textcomp}
\usepackage{xcolor}

\newtheorem{proposition}{Proposition}

\usepackage{physics}
\usepackage{float}
\usepackage{mathtools}
\usepackage{bm}
\usepackage{booktabs}
\usepackage{multirow}

\usepackage{hyperref}
\usepackage{algorithm}
\usepackage{algpseudocode}

\newcommand{\ii}{\mathrm{i}}
\newcommand{\e}{\mathrm{e}}

\def\BibTeX{{\rm B\kern-.05em{\sc i\kern-.025em b}\kern-.08em
    T\kern-.1667em\lower.7ex\hbox{E}\kern-.125emX}}
\begin{document}

\title{Self-Regulating Annealing in Heavy-Tailed Diffusion Models}

\author{\IEEEauthorblockN{Keito Wakatsuki}
\IEEEauthorblockA{
\textit{Graduate School of Informatics} \\ \textit{Kyoto University, Kyoto, Japan}
\\wakatsuki.keito.63c@st.kyoto-u.ac.jp}
\and
\IEEEauthorblockN{Hideaki Shimazaki}
\IEEEauthorblockA{
\textit{Graduate School of Informatics} \\ \textit{Kyoto University, Kyoto, Japan}
\\ h.shimazaki@i.kyoto-u.ac.jp }}

\maketitle

\begin{abstract}
  Diffusion models have emerged as a leading framework for deep generative modeling.
  While the standard Gaussian formulation is theoretically convenient, its suitability for heavy-tailed datasets remains unclear.
  To address this, heavy-tailed diffusion models (HTDMs) \cite{pandey2024heavytailed} extend the standard formulation by replacing the Gaussian distribution with a Student's $t$-distribution, thereby improving tail fidelity on heavy-tailed datasets.
  Although stochastic differential equation (SDE)-based sampling is possible in HTDMs, it has not been fully explored.
  In this paper, we propose an SDE-based sampler for HTDMs that explicitly incorporates a state-dependent diffusion coefficient.
  This state dependence naturally induces a self-regulating annealing mechanism by adaptively modulating the effective noise scale.
  We theoretically explore this mechanism and experimentally verify its necessity for reproducing samples from a heavy-tailed distribution.
\end{abstract}

\begin{IEEEkeywords}
  Diffusion Model, Heavy-Tailed Diffusion Model, Student's t-Distribution, Stochastic Differential Equation
\end{IEEEkeywords}
  
\section{Introduction}
Diffusion models have emerged as a leading framework for deep generative modeling in recent years, particularly in image generation and other computer vision tasks \cite{sohl2015deep, yang2023diffusion}.
Denoising diffusion probabilistic models (DDPMs) \cite{ho2020denoising} and their continuous-time formulations via stochastic differential equations (SDEs) and ordinary differential equations (ODEs) \cite{song2020score} provide a foundational framework and continue to attract substantial research interest.

In standard diffusion models, both the forward diffusion process and the reverse generative process are formulated using Gaussian transitions.
While this Gaussian formulation yields analytically tractable marginals and posteriors, as well as simple training objectives, its suitability for heavy-tailed or outlier-prone datasets remains unclear.
Motivated by this concern, recent studies have explored \emph{non-Gaussian} diffusion models that replace the Gaussian formulation with heavy-tailed alternatives, such as approaches based on $\alpha$-stable Lévy processes \cite{yoon2023score} and the Student's $t$-distribution \cite{pandey2024heavytailed}.
In particular, Pandey et al.\ \cite{pandey2024heavytailed} propose heavy-tailed diffusion models (HTDMs), a Student's $t$-based extension that is readily compatible with standard diffusion models and requires only minimal modifications.
Compared with standard diffusion models, HTDMs offer controllable tail behavior and better coverage of outlier regions.

In parallel, there has been growing interest in theoretically analyzing the generative dynamics of diffusion models from a statistical-physics perspective \cite{ambrogioni2023statistical, raya2024spontaneous}. 
In particular, these dynamics have been shown to be theoretically connected to the memory retrieval dynamics of associative memory models \cite{ambrogioni2024search, hoover2023memory}.
Associative memory models have also been extended using deformed exponential families, including the Student's $t$-distribution \cite{aguilera2025explosive}. 
This extension yields curved neural networks that exhibit rich and complex phenomena, such as explosive phase transitions with hysteresis and increased memory capacity. 
These phenomena arise from the stochastic dynamics of memory retrieval driven by a \emph{self-regulating annealing mechanism}, which modulates memory retrieval via state-dependent changes in the effective temperature induced by higher-order statistical structure inherent to the model.

While similar dynamical properties may also arise in diffusion models extended with deformed exponential families, the effects of such deformations have not been fully explored.
For HTDMs, Pandey et al.~\cite{pandey2024heavytailed} derive both an ODE and an SDE from the Student's $t$-based reverse kernel.
However, the empirical evaluation in \cite{pandey2024heavytailed} primarily focuses on the ODE, leaving the SDE largely unexplored.
To bridge this gap, we conduct a detailed analysis of the SDE for HTDMs and propose a novel SDE-based sampler. 
Our proposed sampler incorporates a state-dependent diffusion coefficient, which we argue is necessary to reproduce samples from a heavy-tailed distribution. 
Furthermore, we theoretically demonstrate that this state dependence induces the self-regulating annealing mechanism. 
The effective annealing temperature is automatically modulated based on the distance between the current and target states at each time step.
This mechanism is closely related to the dynamics of memory retrieval in deformed associative memory models.

\vspace{0.25em}
\noindent Our main contributions are summarized as follows:
\begin{itemize}
  \item We propose an SDE-based sampler for HTDMs that incorporates a state-dependent diffusion coefficient.
  \item We demonstrate that this state dependence induces a self-regulating annealing mechanism.
  \item We empirically show that our SDE-based sampler improves tail fidelity on a synthetic dataset relative to a Gaussian baseline and an ablated SDE variant without the state-dependent diffusion coefficient.
\end{itemize}
\begin{flushleft}
  Code availability: \url{https://github.com/keito93/self-regulating-annealing-in-htdms.git}
\end{flushleft}

\section{Background}
\subsection{Diffusion Models}
\label{subsec:diffusion}

We briefly review the standard formulation of diffusion models as latent-variable models \cite{ho2020denoising, luo2022understanding}. 
Let $\mathbf{x}_0 \in \mathbb{R}^d$ denote an observed variable drawn from the data distribution $p_{\mathrm{data}}$.
The latent variables $\mathbf{x}_{\Delta t}, \mathbf{x}_{2\Delta t}, \ldots, \mathbf{x}_{N \Delta t} \in \mathbb{R}^d$ are generated by a fixed diffusion process $q$ that gradually corrupts $\mathbf{x}_0$ into pure noise.
The generative model $p_{\bm{\theta}}$, parameterized by a neural network, defines a generative process that aims to reverse the diffusion process.
When both $q$ and $p_{\bm{\theta}}$ are modeled as Markov chains (as in standard diffusion models), the negative evidence lower bound (negative ELBO) is given by
\begin{small}
\begin{align}
  \mathcal{L}(\bm{\theta}) =& -\mathbb{E}_{q(\mathbf{x}_{\Delta t} \mid \mathbf{x}_0)}[\ln p_{\bm{\theta}}(\mathbf{x}_0 \mid \mathbf{x}_{\Delta t})]
  + \mathcal{D}_0(q(\mathbf{x}_T \mid \mathbf{x}_0) \,\Vert\, p(\mathbf{x}_T)) \notag\\
  &+ \sum_{t = 2\Delta t}^{T} \mathbb{E}_{q(\mathbf{x}_t \mid \mathbf{x}_0)}\big[\mathcal{D}_{0}(q(\mathbf{x}_{t - \Delta t} \mid \mathbf{x}_t, \mathbf{x}_0) \,\Vert\, p_{\bm{\theta}}(\mathbf{x}_{t - \Delta t} \mid \mathbf{x}_t))\big],
  \label{objective function 1}
\end{align}
\end{small}where $T \coloneq N \Delta t$, $t \in \{\Delta t, 2\Delta t, \ldots, T\}$ denotes the discrete time step, and $\mathcal{D}_0$ denotes the Kullback--Leibler (KL) divergence.
The second term in \eqref{objective function 1} matches the prior $p(\mathbf{x}_T)$ of the generative process to the marginal $q(\mathbf{x}_T \mid \mathbf{x}_0)$ of the diffusion process.
The third term, which we refer to as the \emph{denoising matching term}, requires the trainable reverse kernel $p_{\bm{\theta}}(\mathbf{x}_{t -\Delta t} \mid \mathbf{x}_t)$ to approximate the true posterior $q(\mathbf{x}_{t -\Delta t} \mid \mathbf{x}_t, \mathbf{x}_0)$.
In practice, this term serves as the training objective.
To make the denoising matching term tractable, it is common to choose the forward process $q(\mathbf{x}_t \mid \mathbf{x}_{t - \Delta t})$ 
so that both the marginal $q(\mathbf{x}_t \mid \mathbf{x}_0)$ and the posterior $q(\mathbf{x}_{t-\Delta t} \mid \mathbf{x}_t,\mathbf{x}_0)$ can be obtained in closed form.
If $q(\mathbf{x}_t \mid \mathbf{x}_{t-\Delta t})$ is a linear Gaussian transition, then the corresponding marginal and posterior are also Gaussian and analytically tractable.
In this case, with an appropriate choice of the reverse kernel $p_{\bm{\theta}}(\mathbf{x}_{t-\Delta t}\mid \mathbf{x}_t)$, the denoising matching term reduces to a mean-squared-error (MSE) loss.

After training, sampling starts from $\mathbf{x}_T \sim p(\mathbf{x}_T)$, and a synthetic sample $\mathbf{x}_0$ is obtained by iteratively applying the reverse kernel $p_{\bm{\theta}}(\mathbf{x}_{t-\Delta t} \mid \mathbf{x}_t)$.
In the continuous-time limit $\Delta t \to 0$, this discrete-time generative process converges to an SDE, which can be solved numerically to generate samples.
While Gaussian diffusion models are highly successful in practice, it remains unclear whether the Gaussian formulation is well suited to heavy-tailed or outlier-prone datasets, which motivates the heavy-tailed extension discussed next.

\subsection{Heavy-Tailed Diffusion Models}
\label{subsec:htdm}

Pandey et al.\ \cite{pandey2024heavytailed} propose HTDMs and instantiate them as $t$-EDM, a Student's $t$-based extension of the EDM design of Karras et al.\ \cite{karras2022elucidating}.
We follow their construction and slightly reformulate it to align with our subsequent discussion.

They start from the same negative ELBO as in \eqref{objective function 1}.
The forward diffusion process is defined as an $Nd$-dimensional multivariate Student's $t$-distribution over all time steps:
\begin{align}
  &q(\mathbf{x}_{\Delta t}, \mathbf{x}_{2\Delta t}, \ldots, \mathbf{x}_{N \Delta t} \mid \mathbf{x}_0)
  \coloneq \mathrm{St}_{Nd}(\bm{\mu}, \bm{\Sigma}, \nu), \label{HTDMs diffusion process1}\\
  &\quad \bm{\mu} \coloneq [\mu_{\Delta t}, \mu_{2\Delta t}, \ldots, \mu_{N \Delta t}]^\top \otimes \mathbf{x}_0,\\
  &\quad \bm{\Sigma} \coloneq \widetilde{\bm{\Sigma}} \otimes \mathbf{I}_d, \label{tridiagonal matrix 1}
\end{align}
where $\nu > 2$ denotes the degrees of freedom
and $\widetilde{\bm{\Sigma}}$ is an $N\times N$ symmetric tridiagonal matrix whose diagonal entries are
$(\widetilde{\bm{\Sigma}})_{k,k}=\sigma_{k\Delta t}^2$ for $k=1, 2, \ldots,N$,
and whose first off-diagonal entries are
$(\widetilde{\bm{\Sigma}})_{k,k+1}=(\widetilde{\bm{\Sigma}})_{k+1,k}=c_{(k+1)\Delta t}^2$ for $k=1, 2, \ldots,N-1$.
$\mathbf{I}_d$ denotes the $d \times d$ identity matrix, and $\otimes$ denotes the Kronecker product.
Note that under this definition, the diffusion process is non-Markovian.

A key property of this formulation is that the marginals and posteriors in the denoising matching term can be obtained in closed form by marginalization \cite{ding2016conditional}:
\begin{align}
  q(\mathbf{x}_{t} \mid \mathbf{x}_{0}) &= \mathrm{St}_d(\mu_t \mathbf{x}_0, \sigma_t^2 \mathbf{I}_d, \nu), \label{HTDMs diffusion process2}\\
  q(\mathbf{x}_{t - \Delta t} \mid \mathbf{x}_t, \mathbf{x}_0) &= \mathrm{St}_d \left(\bar{\bm{\mu}}_t, \frac{\nu + \Delta_t^2}{\nu + d}\bar{\sigma}_{t}^2 \mathbf{I}_d, \nu + d \right), \label{HTDMs posterior1} \\
  \bar{\bm{\mu}}_t &\coloneq \frac{c_t^2}{\sigma_t^2} \mathbf{x}_t + \left(\mu_{t - \Delta t} - \frac{c_t^2}{\sigma_t^2} \mu_t\right) \mathbf{x}_0, \label{barmu 1}\\
  \bar{\sigma}_t^2 &\coloneq \sigma_{t - \Delta t}^2 - \frac{c_t^4}{\sigma_t^2}, \label{barsigma 1}\\
  \Delta_t^2 &\coloneq \frac{1}{\sigma_t^2} \lVert \mathbf{x}_t - \mu_t \mathbf{x}_0 \rVert^2, \label{state-dependent term1}
\end{align}
where $\Delta_t^2$ denotes the squared Mahalanobis distance.
The schedule of $\mu_t$ is designed so that the marginal $q(\mathbf{x}_T \mid \mathbf{x}_0)$ approximately matches the prior $p(\mathbf{x}_T)=\mathrm{St}_d(\bm{0},\sigma_T^2\mathbf{I}_d,\nu)$, 
and $\sigma_t$ is chosen to increase monotonically with $t$.

Based on the posterior in \eqref{HTDMs posterior1}, Pandey et al.\ \cite{pandey2024heavytailed} define the trainable reverse kernel $p_{\bm{\theta}}(\mathbf{x}_{t - \Delta t} \mid \mathbf{x}_t)$ as a Student's $t$-distribution.
Since the KL divergence between Student's $t$-distributions is intractable, they replace the KL divergence in the denoising matching term with a scaled Eguchi $\gamma$-power divergence \cite{Eguchi2021PythagorasIG, kim2023t}.
This scaled $\gamma$-power divergence is an extension of the KL divergence, to which it converges in the limit $\gamma \to 0$.
Furthermore, to simplify the loss function, they omit the state-dependent factor $(\nu + \Delta_t^2)/(\nu + d)$ in the posterior covariance and approximate the posterior as
\begin{align}
  q(\mathbf{x}_{t - \Delta t} \mid \mathbf{x}_t, \mathbf{x}_0) \approx \mathrm{St}_d \left(\bar{\bm{\mu}}_t, \bar{\sigma}_{t}^2 \mathbf{I}_d, \nu + d \right) \label{HTDMs posterior2}.
\end{align}
Using this simplified posterior, they parameterize the reverse kernel as
\begin{align}
  &p_{\bm{\theta}}(\mathbf{x}_{t- \Delta t} \mid \mathbf{x}_{t}) = \mathrm{St}_d \left(\bm{\mu}_{\bm{\theta}}(\mathbf{x}_t, t), \bar{\sigma}_t^2 \mathbf{I}_d, \nu + d\right), \label{HTDMs model1} \\
  &\quad \bm{\mu}_{\bm{\theta}}(\mathbf{x}_t, t) \coloneq \frac{c_{t}^2}{\sigma_t^2} \mathbf{x}_t + \left(\mu_{t-\Delta t} - \frac{c_{t}^2}{\sigma_t^2} \mu_t\right) \bm{D}_{\bm{\theta}}(\mathbf{x}_t, \sigma_t).
\end{align}
Here, $\bm{D}_{\bm{\theta}}(\mathbf{x}_t, \sigma_t) : \mathbb{R}^d \times \mathbb{R}_{+} \to \mathbb{R}^d$ is a neural network that predicts $\mathbf{x}_0$ from $\mathbf{x}_t$, and aims to approximate $\mathbb{E}[\mathbf{x}_0 \mid \mathbf{x}_t]$.
$\bm{D}_{\bm{\theta}}(\mathbf{x}_t, \sigma_t)$ thus serves as a \emph{denoiser}.
With the scaled $\gamma$-power divergence and the above parameterization, the denoising matching term reduces to the familiar MSE loss:
\begin{align}
  \mathbb{E}_{t \sim \mathcal{U}(\{\Delta t, 2 \Delta t, \ldots,T\}), \mathbf{x}_0 \sim p_{\text{data}}, \mathbf{x}_t \sim q(\mathbf{x}_t \mid \mathbf{x}_0)} 
  \left[ \lVert \mathbf{x}_0 - \bm{D}_{\bm{\theta}}(\mathbf{x}_t, \sigma_t) \rVert^2\right],
 \label{loss function of HTDMs 1}
\end{align}
where $\mathcal{U}(\{\Delta t, 2 \Delta t, \ldots,T\})$ denotes the uniform distribution over discrete time steps.

\subsection{Sampling of HTDMs}
\label{subsec:htdm_sampling}

Given a trained denoiser, sampling starts from $\mathbf{x}_T \sim p(\mathbf{x}_T)$, and a synthetic sample $\mathbf{x}_0$ is obtained by iteratively applying the reverse kernel $p_{\bm{\theta}}(\mathbf{x}_{t-\Delta t} \mid \mathbf{x}_t)$ defined in \eqref{HTDMs model1}.
From this discrete-time reverse kernel, Pandey et al.\ \cite{pandey2024heavytailed} derive both an ODE and an SDE for HTDMs in the continuous-time limit $\Delta t \to 0$.
That is, for practical sampling, the generative process is defined and implemented as a Markov process, even though the diffusion process is non-Markovian.
In our setting, their ODE is derived from a particular choice of $c_t^2$ in \eqref{barsigma 1}. 
With $c_t^2 = \sigma_t \sigma_{t-\Delta t}$, the variance of the reverse kernel, $\bar{\sigma}_t^2 = \sigma_{t-\Delta t}^2 - c_t^4 / \sigma_t^2$, vanishes.
Consequently, the reverse kernel becomes deterministic, and the continuous-time limit yields the following ODE:
\begin{align}
  \dv{\mathbf{x}_t}{t}
  = \frac{\dot{\mu}_t}{\mu_t} \mathbf{x}_t + \left[\frac{\dot{\sigma}_t}{\sigma_t} - \frac{\dot{\mu}_t}{\mu_t}\right] \left(\mathbf{x}_t - \mu_t \bm{D}_{\bm{\theta}}(\mathbf{x}_t, \sigma_t) \right).
  \label{HTDMs ODE1}
\end{align}
They also derive the SDE from the reverse kernel in \eqref{HTDMs model1}, but do not further explore it as a practical sampler.
In the next section, we focus on this SDE perspective and develop a practical SDE-based sampler that explicitly incorporates the state-dependent factor.

\section{Self-Regulating Annealing in HTDMs}
\label{sec:self_regulating_sde}

\subsection{SDE-Based Sampler for HTDMs}

In this subsection, we derive an SDE-based sampler by taking the continuous-time limit $\Delta t \to 0$ of the reverse kernel in \eqref{HTDMs model1}.
In this limit, the generative process is described by a Wiener-driven SDE. 
We argue that reproducing heavy-tailed behavior in this SDE requires a state-dependent factor.
To retain a non-vanishing noise term in the continuous-time limit, we choose $c_t^2$ in \eqref{barsigma 1} as $c_t^2 = \sigma_{t-\Delta t}^2$.
Under a smoothness assumption on $\sigma_t$, this choice yields
\begin{align}
  \bar{\sigma}_t^2 = \sigma_{t-\Delta t}^2 - \frac{c_t^4}{\sigma_t^2} = 2 \sigma_t \dot{\sigma}_t \Delta t + \order{\Delta t^2}.
\end{align}
Hence, the variance of the reverse kernel is of order $\Delta t$, so that the diffusion term remains in the continuous-time limit.
Although the state-dependent factor $(\nu + \Delta_t^2) / (\nu + d)$ appearing in the posterior covariance in \eqref{HTDMs posterior1} is omitted during training, we incorporate it during sampling in our SDE-based sampler.
Its explicit incorporation distinguishes our approach from the previous study \cite{pandey2024heavytailed}.
Using $c_t^2=\sigma_{t-\Delta t}^2$ and a reparameterization of the Student's $t$-distribution,
we can rewrite the reverse transitions as
\begin{small}
  \begin{align}
    \mathbf{x}_{t - \Delta t} =& \mathbf{x}_t + \left[\frac{\dot{\mu}_t}{\mu_t} \mathbf{x}_t - \left(\frac{\dot{\mu}_t}{\mu_t} - \frac{2 \dot{\sigma}_t}{\sigma_t}\right)(\mathbf{x}_t - \mu_t \bm{D}_{\bm{\theta}}(\mathbf{x}_t, \sigma_t))\right](-\Delta t) \notag\\
    &+ \sqrt{\frac{\nu + \Delta_t^2}{\nu + d}} \sqrt{2 \sigma_t \dot{\sigma}_t} \sqrt{\Delta t}\frac{\mathbf{z}_t}{\sqrt{\kappa_t}}, \label{discrete generative process1}
  \end{align}
\end{small}where
\begin{align}
  \Delta_t^2
  &\coloneq \frac{1}{\sigma_t^2}
    \left\lVert \mathbf{x}_t - \mu_t \mathbb{E}[\mathbf{x}_0 \mid \mathbf{x}_t]\right\rVert^2
    \approx \frac{1}{\sigma_t^2}
    \left\lVert \mathbf{x}_t - \mu_t \bm{D}_{\bm{\theta}}(\mathbf{x}_t, \sigma_t) \right\rVert^2.
  \label{state-dependent term2}
\end{align}
Here, $\bm{D}_{\bm{\theta}}(\mathbf{x}_t, \sigma_t)$ is the denoiser trained to minimize the loss function in \eqref{loss function of HTDMs 1}.
Let $\mathbf{z}_t \sim \mathcal{N}_d(\mathbf{0}, \mathbf{I}_d)$ and $\kappa_t \sim \chi^2(\nu+d) / (\nu+d)$ be independent, where $\mathcal{N}_d(\mathbf{0}, \mathbf{I}_d)$ denotes the $d$-dimensional standard Gaussian distribution and $\chi^2(\nu+d)$ denotes the chi-squared distribution with $\nu+d$ degrees of freedom.
Then, $\mathbf{z}_t/\sqrt{\kappa_t} \sim \mathrm{St}_d(\bm{0},\mathbf{I}_d,\nu+d)$.

We next show that in the continuous-time limit $\Delta t \to 0$, the discrete-time update in \eqref{discrete generative process1} converges to an It\^{o} SDE driven by a Wiener process.
To this end, we consider a general forward-time update with Student's $t$ noise:
\begin{align}
  \Delta \mathbf{x}_t
  &\coloneq \mathbf{x}_{t+\Delta t} - \mathbf{x}_t
   = \bm{f}(\mathbf{x}_t, t) \Delta t
     + g(\mathbf{x}_t, t)\sqrt{\Delta t} \frac{\mathbf{z}_t}{\sqrt{\kappa_t}}, \\
  \mathbf{z}_t &\sim \mathcal{N}_d(\mathbf{0}, \mathbf{I}_d), \quad
  \kappa_t = \frac{\xi_t}{\nu + d},\quad
  \xi_t \sim \chi^2(\nu + d),
\end{align}
where $\bm{f}(\mathbf{x}_t,t) \in \mathbb{R}^d$ is a $d$-dimensional vector-valued function and $g(\mathbf{x}_t,t)\in\mathbb{R}$ is a scalar-valued function.
The random variables $\mathbf{z}_t$ and $\kappa_t$ are independent, and the pairs $(\mathbf{z}_t,\kappa_t)$ are i.i.d.\ over $t$.

\begin{proposition}[Gaussian limit of Student's $t$ increments]
\label{prop:gaussian_limit}
Assume $\nu > 2$.
For fixed $\mathbf{x}_t=\bm{x}$ and $t$, the conditional distribution of $\Delta \mathbf{x}_t$ is approximated by a Gaussian distribution to first order in $\Delta t$.
\begin{align}
  p(\Delta \mathbf{x}_t \mid \mathbf{x}_t = \bm{x})
  \approx \mathcal{N}_d \left(
    \bm{f}(\bm{x}, t)\Delta t, g^2(\bm{x}, t) \Delta t \frac{\nu + d}{\nu + d - 2} \mathbf{I}_d
  \right).
  \label{eq:gaussian_increment_approx}
\end{align}
Consequently, in the continuous-time limit $\Delta t \to 0$, 
the resulting dynamics are described by an It\^{o} SDE driven by a $d$-dimensional Wiener process:
\begin{align}
  \dd \mathbf{x}_t = \bm{f}(\mathbf{x}_t,t)\dd t + g(\mathbf{x}_t,t)\sqrt{\frac{\nu + d}{\nu + d - 2}} \dd \mathbf{w}_t.
\end{align}
\end{proposition}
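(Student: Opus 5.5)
The plan is to compute the first two conditional moments of the increment exactly and then argue that, at the level of the continuous-time limit, only these moments matter.

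\textbf{Step 1: first two moments.} Fix $\mathbf{x}_t=\bm{x}$. Write $\mathbf{u}_t \coloneq \mathbf{z}_t/\sqrt{\kappa_t}\sim\mathrm{St}_d(\bm{0},\mathbf{I}_d,\nu+d)$. By independence of $\mathbf{z}_t$ and $\kappa_t$ we have $\mathbb{E}[\mathbf{u}_t]=\mathbb{E}[\mathbf{z}_t]\,\mathbb{E}[\kappa_t^{-1/2}]=\bm{0}$. Here $\mathbb{E}[\kappa_t^{-1/2}]<\infty$ because $\nu+d>1$. For the covariance,
\begin{align}
  \mathbb{E}[\mathbf{u}_t\mathbf{u}_t^\top]=\mathbf{I}_d\,(\nu+d)\,\mathbb{E}[\xi_t^{-1}]=\frac{\nu+d}{\nu+d-2}\mathbf{I}_d ,
\end{align}
using the inverse-$\chi^2$ mean $\mathbb{E}[\xi^{-1}]=1/(k-2)$ for $k=\nu+d>2$, which holds since $\nu>2$. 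Hence the increment has
\begin{align}
  \mathbb{E}[\Delta\mathbf{x}_t\mid\bm{x}]=\bm{f}(\bm{x},t)\Delta t,\qquad
  \mathrm{Cov}[\Delta\mathbf{x}_t\mid\bm{x}]=g^2(\bm{x},t)\,\Delta t\,\frac{\nu+d}{\nu+d-2}\mathbf{I}_d .
\end{align}
Matching a Gaussian to these two moments gives \eqref{eq:gaussian_increment_approx}. The sense of ``first order in $\Delta t$'' is that the drift is $\order{\Delta t}$ and the second moment is $g^2\frac{\nu+d}{\nu+d-2}\Delta t\,\mathbf{I}_d+\order{\Delta t^2}$, the $\order{\Delta t^2}$ correction coming from $\bm{f}\bm{f}^\top\Delta t^2$.

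\textbf{Step 2: passage to the SDE.} Sum the increments over a fixed macroscopic interval $[t,t+s]$ split into $s/\Delta t$ steps. Conditionally on the path, the noise contributions $g\sqrt{\Delta t}\,\mathbf{u}$ are independent and mean zero, with finite covariance since $\nu>2$. The (Lindeberg) central limit theorem, or equivalently Donsker's invariance principle applied to the i.i.d.\ sequence $\mathbf{u}_t$ normalized by $\sqrt{(\nu+d)/(\nu+d-2)}$, shows that the rescaled partial sums $\sqrt{\Delta t}\sum\mathbf{u}$ converge to $\sqrt{(\nu+d)/(\nu+d-2)}\,\mathbf{w}_t$ with $\mathbf{w}_t$ a standard Wiener process. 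A standard Euler–Maruyama/Kramers–Moyal argument then identifies the limit. Its drift is $\bm{f}$ and its diffusion matrix is $g^2\frac{\nu+d}{\nu+d-2}\mathbf{I}_d$. The higher Kramers–Moyal coefficients vanish, since $\mathbb{E}\lVert\Delta\mathbf{x}_t\rVert^{2+\delta}=\order{\Delta t^{1+\delta/2}}=o(\Delta t)$ for $0<\delta<\nu+d-2$.

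\textbf{Main obstacle.} The main obstacle is this last step. Heavy tails mean the increments have only finitely many moments, so the usual fourth-moment Lindeberg condition is unavailable. One needs $\nu+d>4$ for a fourth moment, which is not guaranteed when $d$ is small. I would therefore use the $(2+\delta)$-moment version of Lyapunov's condition, which only needs $\nu+d>2+\delta$ for some $\delta>0$ and is implied by $\nu>2$. I would assume $\bm{f},g$ are Lipschitz, so that weak convergence of Euler schemes with i.i.d.\ finite-variance innovations to the Itô SDE applies.
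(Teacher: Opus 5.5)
Your proof is correct. Its key computation is the same as the paper's: $\mathbb{E}[\kappa_t^{-1}]=(\nu+d)\,\mathbb{E}[\xi_t^{-1}]=(\nu+d)/(\nu+d-2)$, using independence of $\mathbf{z}_t$ and $\kappa_t$. The surrounding argument is organized differently.

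The paper works with the conditional characteristic function. It integrates out $\mathbf{z}_t$ given $\kappa_t$, expands $\mathbb{E}_{\kappa_t}[\exp(-g^2\Delta t\lVert\bm{u}\rVert^2/(2\kappa_t))]$ to first order in $\Delta t$, and reads off the Gaussian characteristic function up to $o(\Delta t)$. The passage to the It\^{o} SDE is then only asserted (``Consequently'').

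You instead compute the exact mean and covariance of the increment, match a Gaussian to them, and then actually argue the continuous-time limit. You do this through drift, diffusion and $(2+\delta)$-moment conditions under a Lipschitz assumption on $\bm{f},g$.

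The paper's route makes ``Gaussian to first order'' a statement about the full law of one increment, which two-moment matching alone does not give. In your write-up that content comes from $\mathbb{E}\lVert\Delta\mathbf{x}_t\rVert^{2+\delta}=\order{\Delta t^{1+\delta/2}}$. Combined with $\lvert\e^{\ii y}-1-\ii y+y^2/2\rvert\le\lvert y\rvert^{2+\delta}$ for $0<\delta\le1$, this recovers the paper's characteristic-function expansion with the sharper remainder $\order{\Delta t^{1+\delta/2}}$. In turn, your route justifies the ``Consequently'' and makes explicit the regularity required of $\bm{f}$ and $g$. That matters here because $g$ depends on the state through $\Delta_t^2$.

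Two minor corrections. First, Lindeberg's condition does not need fourth moments. For the i.i.d.\ finite-covariance innovations $\mathbf{z}_t/\sqrt{\kappa_t}$, the truncated second moment vanishes by dominated convergence. So $\nu+d>2$ already suffices, and the $(2+\delta)$ bound is a convenience rather than a necessity. Second, with state-dependent $g$ the noise terms $g\sqrt{\Delta t}\,\mathbf{z}_t/\sqrt{\kappa_t}$ are martingale differences, not independent. The limit should therefore be identified via the martingale problem, or stability of SDEs under converging drivers, rather than by Donsker alone.
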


\begin{proof}
  Fix $(\bm{x},t)$ and set $\bm{f} \coloneq \bm{f}(\bm{x},t)$, $g \coloneq g(\bm{x},t)$.
  Define the characteristic function of $\Delta \mathbf{x}_t$ conditioned on $\mathbf{x}_t=\bm{x}$ as
  \begin{align}
    \phi(\bm{u}\mid \bm{x})
    \coloneq
    \mathbb{E}_{\mathbf{z}_t, \kappa_t}\left[\e^{\ii \bm{u}^\top \Delta\mathbf{x}_t}\mid \mathbf{x}_t=\bm{x}\right].
  \end{align}
  Using $\Delta \mathbf{x}_t = \bm{f}\Delta t + g \sqrt{\Delta t} \frac{\mathbf{z}_t}{\sqrt{\kappa_t}}$ 
  and $\mathbf{z}_t\sim\mathcal{N}_d(\bm{0},\mathbf{I}_d)$, we first take the conditional expectation over $\mathbf{z}_t$ given $\kappa_t$:
  \begin{align}
    \phi(\bm{u}\mid \bm{x})
    &= \e^{\ii \bm{u}^\top \bm{f}\Delta t}
       \mathbb{E}_{\kappa_t}\left[
        \exp\left(-\frac{g^2 \Delta t}{2\kappa_t}\|\bm{u}\|^2\right)\right] \notag\\
    &= 1 + \ii \bm{u}^\top \bm{f}\Delta t
       -\frac{1}{2}g^2 \Delta t \mathbb{E} \left[\kappa_t^{-1}\right] \|\bm{u}\|^2 + o(\Delta t).
  \end{align}
  Since $\kappa_t =\xi_t /(\nu+d)$ and $\xi_t \sim \chi^2(\nu+d)$,
  \begin{align}
    \mathbb{E} \left[\kappa_t^{-1}\right] = (\nu+d)\mathbb{E}\left[\xi_t^{-1}\right] =\frac{\nu+d}{\nu+d-2}.
  \end{align}
  We therefore have
  \begin{small}
    \begin{align}
      &\phi(\bm{u}\mid \bm{x}) \notag\\
      =& 1+ \ii \bm{u}^\top\bm{f}\Delta t -\frac{1}{2}\bm{u}^\top \left(g^2 \Delta t \frac{\nu+d}{\nu+d-2} \mathbf{I}_d \right) \bm{u} + o(\Delta t),
    \end{align}
  \end{small}which is the characteristic function of the Gaussian distribution in \eqref{eq:gaussian_increment_approx} up to $o(\Delta t)$.
\end{proof}

Applying Proposition \ref{prop:gaussian_limit} to the reverse-time update in \eqref{discrete generative process1}, and noting that sampling proceeds backward in time from $T$ to $0$, we obtain the following reverse-time SDE:
\begin{small}
  \begin{align}
    &\dd \mathbf{x}_t = \left[\frac{\dot{\mu}_t}{\mu_t} \mathbf{x}_t - \left(\frac{\dot{\mu}_t}{\mu_t} - \frac{2 \dot{\sigma}_t}{\sigma_t}\right)(\mathbf{x}_t - \mu_t \bm{D}_{\bm{\theta}}(\mathbf{x}_t, \sigma_t))\right] \dd t \notag \\
    &\qquad \quad+ \alpha(\Delta_t^2(\mathbf{x}_t), t) \sqrt{2 \sigma_t \dot{\sigma}_t} \dd \mathbf{w}_t, \\
    &\alpha(\Delta_t^2(\mathbf{x}_t), t) \coloneq \sqrt{\frac{\nu + \Delta_t^2}{\nu + d - 2}} \label{state-dependent factor1}.
  \end{align}
\end{small}We refer to $\alpha(\Delta_t^2(\mathbf{x}_t), t)$ as the state-dependent coefficient.
We can generate samples by numerically solving this SDE from $t=T$ to $t=0$, starting from $\mathbf{x}_T \sim p(\mathbf{x}_T) = \mathrm{St}_d(\bm{0}, \sigma_T^2 \mathbf{I}_d, \nu)$.

For example, choosing $\mu_t = 1$ and $\sigma_t = \sigma\sqrt{t}$ yields
\begin{align}
  \dd \mathbf{x}_t &= \frac{\mathbf{x}_t - \bm{D}_{\bm{\theta}}(\mathbf{x}_t, \sigma_t)}{t} \dd t + \sigma \alpha(\Delta_t^2(\mathbf{x}_t), t) \dd \mathbf{w}_t, \label{VE sde sampler 1}\\
  \Delta_t^2 &= \frac{1}{\sigma^2 t} \lVert \mathbf{x}_t - \bm{D}_{\bm{\theta}}(\mathbf{x}_t, \sigma_t) \rVert^2.
\end{align}
This parameter choice corresponds to the variance-exploding (VE) setting in HTDMs.
In the Gaussian limit $\nu \to \infty$ of the Student's $t$-distribution, the state-dependent coefficient converges to $1$, and the SDE reduces to the reverse-time VE-SDE in \cite{song2020score} with constant diffusion coefficient $\sigma$.
Algorithm~\ref{alg:sde_sampler_ve} summarizes a discretized VE-SDE-based sampler for HTDMs.
In implementation, we index the states by the noise level $\sigma$ rather than time: 
we denote by $\mathbf{x}_i$ the state at noise level $\sigma_i$ (with $\sigma_N=\sigma_{\text{max}}$ and $\sigma_1=\sigma_{\text{min}}$), and set $\mathbf{x}_0$ to be the final denoised output.
\begin{algorithm}
  \caption{SDE-based sampler ($\mu_t=1, \sigma_t=\sigma \sqrt{t}$)}
  \label{alg:sde_sampler_ve}
  \begin{algorithmic}[1]
  \Require Denoiser $\bm{D}_{\bm{\theta}}(\mathbf{x},\sigma)$, degrees of freedom $\nu$, dimension $d$, number of steps $N$,
           noise-level sequence $\{\sigma_i\}_{i=1}^{N}$ ($\sigma_{N}=\sigma_{\text{max}}>\cdots>\sigma_{1}=\sigma_{\text{min}}$)
  \State Sample $\mathbf{x}_N \sim \mathrm{St}_d\!\left(\mathbf{0}, \sigma_N^2 \mathbf{I}_d, \nu\right)$ \Comment{Initialization}
  \For{$i=N$ \textbf{down to} $2$}
      \State $\bm{d}_i \gets \dfrac{2(\mathbf{x}_{i} - \bm{D}_{\bm{\theta}}(\mathbf{x}_i,\sigma_i))}{\sigma_i}$ \Comment{Drift term}
      \State $\Delta_i^2 \gets \dfrac{\|\mathbf{x}_i - \bm{D}_{\bm{\theta}}(\mathbf{x}_i,\sigma_i)\|^2}{\sigma_i^2}$
      \State $\alpha_i \gets \sqrt{\dfrac{\nu+\Delta_i^2}{\nu + d - 2}}$ \Comment{State-dependent coefficient}
      \State Sample $\mathbf{z}_i \sim \mathcal{N}_d\!\left(\mathbf{0}, \mathbf{I}_d\right)$
      \State $\mathbf{x}_{i-1} \gets \mathbf{x}_{i}
        + \bm{d}_i\,(\sigma_{i-1}-\sigma_i)
        + \alpha_i\,\sqrt{\sigma_i^2-\sigma_{i-1}^2}\,\mathbf{z}_i$
  \EndFor
  \State $\mathbf{x}_0 \gets \bm{D}_{\bm{\theta}}(\mathbf{x}_{1}, \sigma_{1})$ \Comment{Final denoising}
  \State \Return $\mathbf{x}_0$
  \end{algorithmic}
\end{algorithm}

\subsection{Self-Regulating Annealing in the SDE-Based Sampler}
In this subsection, we analyze our proposed VE-SDE-based sampler in \eqref{VE sde sampler 1} and elucidate its self-regulating annealing mechanism.
We assume that the denoiser $\bm{D}_{\bm{\theta}}(\mathbf{x}_t, \sigma_t)$ has been trained to accurately approximate $\mathbb{E}[\mathbf{x}_0 \mid \mathbf{x}_t]$.
In other words, given a noisy state $\mathbf{x}_t$, the denoiser estimates a corresponding target (denoised) state.
If we ignore the stochastic term in \eqref{VE sde sampler 1} and consider only the drift, the dynamics are driven toward this target estimate.
In particular, the current state is consistent with the target estimate when it satisfies the following fixed-point equation:
\begin{align}
  \mathbf{x}_t - \bm{D}_{\bm{\theta}}(\mathbf{x}_t,\sigma_t) = 0.
\end{align}

On the other hand, the diffusion coefficient contains
$\Delta_t^2 \propto \|\mathbf{x}_t-\bm{D}_{\bm{\theta}}(\mathbf{x}_t,\sigma_t)\|^2$,
which measures the distance between the current state and the target estimate at time $t$.
Hence, when the current state is far from the target estimate, the diffusion coefficient increases, injecting stronger noise and promoting broader exploration; 
when it is close, the diffusion coefficient decreases, leading to milder noise and more gradual exploration.
Equivalently, this mechanism can be interpreted as a state-dependent modulation of the effective annealing temperature, which we refer to as a \emph{self-regulating annealing mechanism} \cite{aguilera2025explosive}.

We next demonstrate the self-regulating annealing mechanism in a setting where the SDE dynamics can be analyzed explicitly.
Consider the symmetric two-point data distribution 
$p_{\mathrm{data}}(\mathbf{x}_0) = \tfrac{1}{2} \delta(\mathbf{x}_0 - \bm{a}) + \tfrac{1}{2} \delta(\mathbf{x}_0 + \bm{a})$,
for some fixed $\bm{a} \in \mathbb{R}^d$.
In this case, $\mathbb{E}[\mathbf{x}_0 \mid \mathbf{x}_t]$ can be obtained in closed form as
\begin{align}
  \mathbb{E}[\mathbf{x}_0 \mid \mathbf{x}_t] &= \bm{a} \tanh_{\gamma}\left(\beta_t(\mathbf{x}_t) \bm{a}^\top \mathbf{x}_t\right), \\
  \beta_t(\mathbf{x}_t) &\coloneq \frac{\nu + d}{\nu \sigma^2 t} \cdot \frac{1}{1 + \frac{1}{\nu \sigma^2 t} (\lVert\mathbf{x}_t \rVert^2 + \lVert\bm{a} \rVert^2)},
\end{align}
where $\gamma \coloneq -\frac{2}{\nu+d}$.
Here, $\tanh_{\gamma}(x)$ denotes the deformed hyperbolic tangent defined as
\begin{align}
  \tanh_{\gamma}(x) \coloneq \frac{(1 + \gamma x)^{\frac{1}{\gamma}} - (1 - \gamma x)^{\frac{1}{\gamma}}}{(1 + \gamma x)^{\frac{1}{\gamma}} + (1 - \gamma x)^{\frac{1}{\gamma}}} \quad \text{for $|\gamma x| < 1$},
\end{align}
which converges to $\tanh(x)$ as $\gamma \to 0$ (equivalently, as $\nu \to \infty$).
Assuming an ideal denoiser $\bm{D}_{\bm{\theta}}(\mathbf{x}_t,\sigma_t) = \mathbb{E}[\mathbf{x}_0 \mid \mathbf{x}_t]$ under this data distribution,
the SDE in \eqref{VE sde sampler 1} can be written as
\begin{align}
  \dd \mathbf{x}_t &= \frac{\mathbf{x}_t - \bm{a} \tanh_{\gamma}\left(\beta_t(\mathbf{x}_t) \bm{a}^\top \mathbf{x}_t\right)}{t} \dd t + \sigma \alpha(\Delta_t^2(\mathbf{x}_t), t) \dd \mathbf{w}_t, \label{VE sde sampler 2}\\
  \Delta_t^2 &= \frac{1}{\sigma^2 t} \lVert \mathbf{x}_t - \bm{a} \tanh_{\gamma}\left(\beta_t(\mathbf{x}_t) \bm{a}^\top \mathbf{x}_t\right) \rVert^2.
\end{align}

Figure~\ref{fig:ve_sde_overview} summarizes the key behavior of \eqref{VE sde sampler 2} in the one-dimensional setting with $a=10$, $\nu=3$, and $\sigma=1$.
Figure~\ref{fig:ve_fp} plots $y=x$ and $y=a \tanh_{\gamma}\bigl(\beta_t(x) a x\bigr)$ for a representative value of $t \in [0,1]$.
Their intersections correspond to the fixed points (zeros of the drift), which satisfy the following fixed-point equation:
\begin{align}
  x = a \tanh_{\gamma} \bigl(\beta_t(x) a x\bigr). \label{fixed point equation2}
\end{align}
This equation is analogous to the self-consistent equations in statistical physics, and can be interpreted as the HTDM counterpart of the result in \cite{ambrogioni2023statistical, raya2024spontaneous}.
In the one-dimensional case, $\Delta_t^2$ in \eqref{VE sde sampler 2} corresponds to the squared vertical gap between the two curves in Figure~\ref{fig:ve_fp}, scaled by the noise level.
Figure~\ref{fig:ve_traj} shows sample trajectories of \eqref{VE sde sampler 2} over $t \in [0, 1]$ (solid) and those of an ablated baseline with $\alpha(\Delta_t^2(\mathbf{x}_t), t) \equiv 1$ (dashed), using the same Wiener increments for a fair comparison. 
Since $\alpha(\Delta_t^2(\mathbf{x}_t), t)$ increases with $\Delta_t^2$, the trajectories exhibit broader exploration when the state is far from the fixed points.

\begin{figure}[t]
  \centering

  \subfloat[]{
    \includegraphics[height=0.17\textheight]{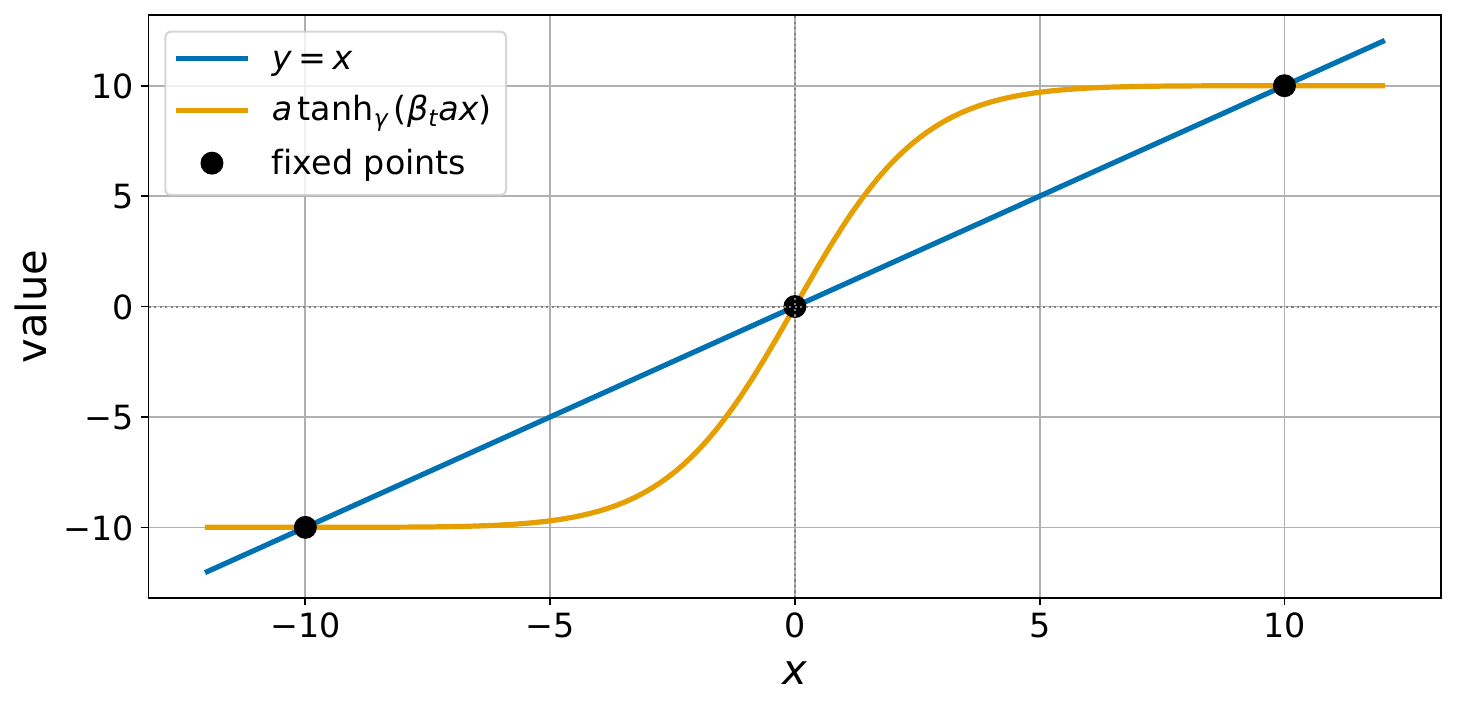}
    \label{fig:ve_fp}
  }

  \vspace{0.3em}

  \subfloat[]{
    \includegraphics[height=0.18\textheight]{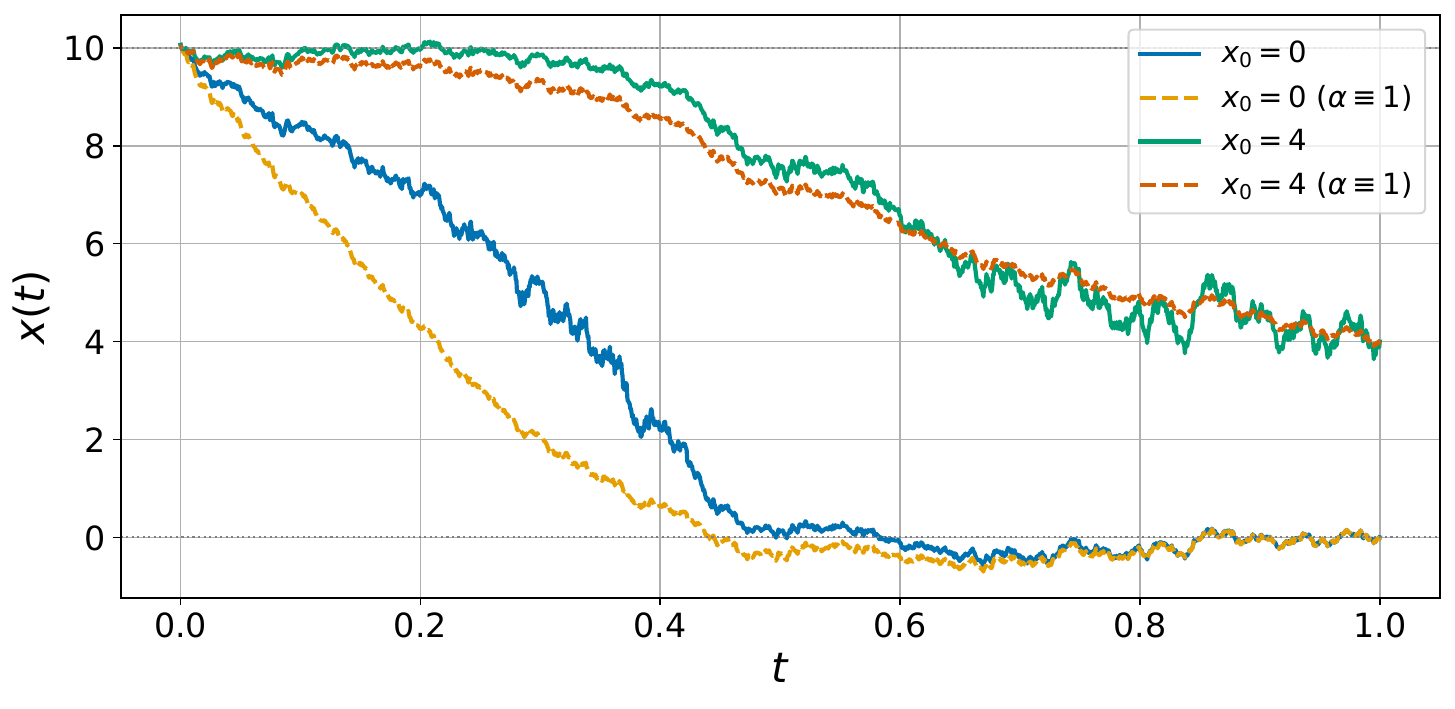}
    \label{fig:ve_traj}
  }

  \caption{Self-regulating annealing mechanism in \eqref{VE sde sampler 2} for the symmetric two-point data distribution in the one-dimensional setting ($a=10, \nu=3, \sigma=1$).
  (a) Plots of $y=x$ and $y=a\tanh_{\gamma}(\beta_t(x)ax)$ for a representative value of $t \in [0,1]$; their intersections correspond to the fixed points (zeros of the drift) satisfying \eqref{fixed point equation2}. 
  The curves change only slightly over $t\in[0,1]$; hence, the curve at $t=1$ is shown.
  (b) Sample trajectories over $t\in[0,1]$: \eqref{VE sde sampler 2} (solid) versus an ablated baseline with $\alpha(\Delta_t^2(\mathbf{x}_t),t)\equiv 1$ (dashed), using the same Wiener increments for a fair comparison.
  }
  \label{fig:ve_sde_overview}
\end{figure}

\section{EXPERIMENTS}
We evaluate generative performance on a synthetic heavy-tailed dataset.
To directly compare with the theoretical tail behavior, we use a one-dimensional Student's $t$-distribution as the data distribution:
\begin{align}
p_{\mathrm{data}}(x) = \mathrm{St}_1(x \mid 0, 1, 3).
\label{eq:synthetic_target}
\end{align}
We evaluate the performance of the following four samplers:
\begin{enumerate}
  \item \textbf{VE-SDE (baseline)}: a Gaussian VE-SDE sampler for EDM\cite{song2020score, karras2022elucidating}.
  \item \textbf{$t$-ODE}: an ODE-based sampler for $t$-EDM \cite{pandey2024heavytailed}.
  \item \textbf{$t$-SDE}: our proposed SDE-based sampler for $t$-EDM that explicitly incorporates the state-dependent coefficient $\alpha(\Delta_t^2(\mathbf{x}_t), t)$.
  \item \textbf{Ablated $t$-SDE}: an ablated variant of our SDE-based sampler for $t$-EDM in which $\alpha(\Delta_t^2(\mathbf{x}_t), t) \equiv 1$.
\end{enumerate}
We focus on tail fidelity, using quantile--quantile (Q--Q) plots for qualitative evaluation and extreme-tail probabilities for quantitative comparison with the ground truth.

\subsection{Experimental Setup}
\subsubsection{Dataset}
We generate $10^6$ i.i.d.\ samples from $p_{\mathrm{data}}$ for training and another $10^6$ i.i.d.\ samples for testing, and apply $z$-score normalization.

\subsubsection{Denoiser Architecture}
We use the same denoiser architecture for all methods: a fully connected neural network with two hidden layers of width 128.

\subsubsection{Preconditioning and Training}
We follow the preconditioning and training settings in Table~7 of \cite{pandey2024heavytailed}: EDM settings for VE-SDE and $t$-EDM settings for $t$-ODE and both $t$-SDE variants.
We train both EDM and $t$-EDM on a fixed budget of $10^7$ samples and use the final checkpoint.

\subsubsection{Sampling}
VE-SDE and the two $t$-SDE variants are all based on the SDE in \eqref{VE sde sampler 1}.
For VE-SDE, we use the Gaussian limit $(\nu \to \infty)$.
The $t$-SDE retains the state-dependent coefficient $\alpha(\Delta_t^2(\mathbf{x}_t), t)$, whereas the ablated $t$-SDE sets it to $1$.
The $t$-ODE is based on the ODE in \eqref{HTDMs ODE1} under the VE parameterization, i.e., $\mu_t = 1$ and $\sigma_t = \sigma\sqrt{t}$.
We choose the discrete noise levels $\{\sigma_i\}_{i=1}^N$ according to the EDM schedule of \cite{karras2022elucidating}.
We solve the ODE using Heun's method and the SDE using the Euler--Maruyama method.

\subsubsection{Hyperparameters}
Unless otherwise noted, we follow the hyperparameter settings in Table~7 of \cite{pandey2024heavytailed}.
When training $t$-EDM, we set $\nu=3$.
For our SDE-based sampler, $\nu$ appears explicitly in both the prior and state-dependent coefficient. 
We therefore treat it as a sampling hyperparameter and sweep $\nu \in \{2.5, 3, 3.5, 4\}$, ultimately using $\nu=2.5$ based on the evaluation criteria described below.
Since the number of function evaluations (NFE) differs between Heun's method and the Euler--Maruyama method, we choose the ODE and SDE step counts to match their NFE budgets, using 64 ODE steps and 128 SDE steps.

\subsection{Evaluation}
We first evaluate overall sample quality by computing the 1-Wasserstein distance $W_1$ from $10^6$ generated and $10^6$ test samples.
We next assess tail fidelity using (i) Q--Q plots, which visualize the alignment of empirical quantiles between generated and test samples, 
and (ii) extreme-tail probabilities in the raw data space.
For Q--Q plots, we use $10^{6}$ generated samples and $10^{6}$ test samples.
We compute empirical quantiles on a uniform grid of $10^{5}$ quantile levels in $[10^{-4}, 1-10^{-4}]$ to prevent extreme quantiles from distorting the axis range.
This trimming is only for visualization; we still evaluate the extreme tails quantitatively using extreme-tail probabilities.

For extreme-tail probabilities, we measure the empirical rate of samples exceeding a fixed two-sided threshold $u$.
This comparison is performed in the raw data space.
We set $u$ to the theoretical two-sided $0.999$ quantile of the ground-truth distribution, i.e., $u \coloneq u_{0.999}=12.924$, so that $\mathbb{P}_{x\sim p_{\mathrm{data}}}(|x|>u)=10^{-3}$.
For each method, we generate $10^7$ samples and split them into 10 disjoint batches of $M = 10^6$ samples.
Within each batch, we estimate $\mathbb{P}(|x|>u)$ by the empirical exceedance rate $\widehat{p}=\frac{1}{M}\sum_{i=1}^{M}\mathbf{1}[|x_i|>u]$, where $\mathbf{1}[\cdot]$ denotes the indicator function.
We report the mean and standard deviation of $\widehat{p}$ across batches, as well as the relative error with respect to the ground truth.

\subsection{Results}
\begin{figure}
  \centering
  \includegraphics[height=0.3\textheight]{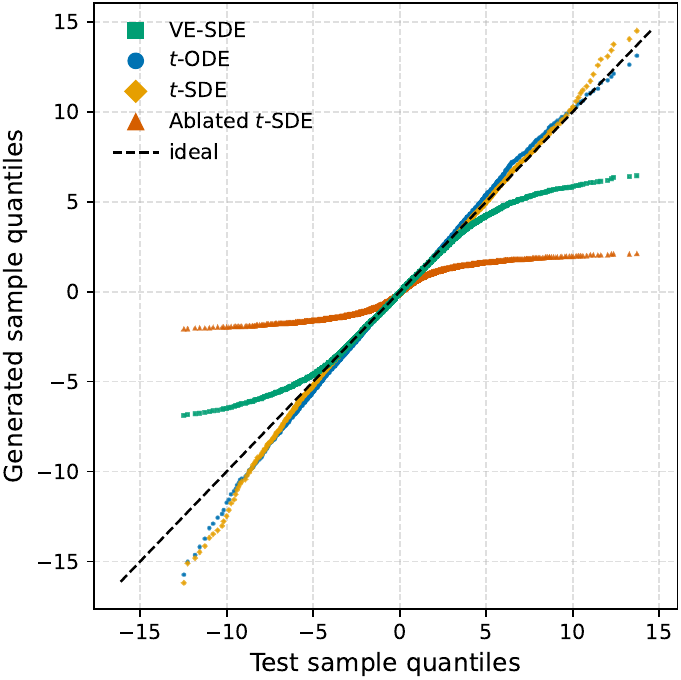}
  \caption{Q--Q plots comparing generated and test samples for the four samplers.}
  \label{fig:qq_plots}
\end{figure}

Figure~\ref{fig:qq_plots} shows the Q--Q plots of the four samplers.
The VE-SDE and the ablated $t$-SDE deviate noticeably from the ideal $y=x$ line in the tail region, indicating poor tail fidelity.
In contrast, $t$-ODE and $t$-SDE closely follow the ideal line, demonstrating superior tail fidelity.

Table~\ref{tab:overall_tail_compact} shows that overall sample quality is broadly comparable across methods, except for the ablated $t$-SDE, which exhibits a markedly larger $\mathcal{W}_1$.
In contrast, tail fidelity differs substantially across methods: the $t$-ODE and $t$-SDE reproduce the correct order of the extreme-tail probability ($10^{-3}$), whereas the VE-SDE severely underestimates this probability and the ablated $t$-SDE reduces it to zero.

\begin{table}[t]
  \centering
  \caption{Overall accuracy ($\mathcal{W}_1$) and heavy-tail fidelity at $u_{0.999}$.
  Extreme-tail probabilities are reported as mean $\pm$ std across batches.
  We define the relative error as $\mathrm{Rel.\ Err.} \coloneq (\widehat{p}-10^{-3})/10^{-3}$.}
  \label{tab:overall_tail_compact}
  \setlength{\tabcolsep}{4pt}
  \small
  \begin{tabular}{lccc}
  \hline
  Method & $\mathcal{W}_1$ & $\mathbb{P}(|X|>u_{0.999})$ & Rel.\ Err. \\
  \hline
  VE-SDE & 0.0287 & $(5.630\pm0.689)\times10^{-5}$ & $-0.9437$ \\
  $t$-ODE & 0.0202 & $(1.307\pm0.029)\times10^{-3}$ & $0.3073$ \\
  $t$-SDE & $\mathbf{0.0162}$ & $(1.233\pm0.030)\times10^{-3}$ & $\mathbf{0.2325}$ \\
  Ablated $t$-SDE & 0.2233 & $0$ & $-1$ \\
  \hline
  \end{tabular}
\end{table}

\section{CONCLUSIONS}
We revisit HTDMs from an SDE perspective and propose an SDE-based sampler. 
This sampler is driven by a Wiener process and explicitly incorporates the state-dependent diffusion coefficient induced by the Student's $t$ posterior.
Owing to the state dependence of the diffusion coefficient, the generative dynamics exhibit a self-regulating annealing mechanism: the effective noise scale increases when the current state is far from the denoiser's target estimate and decreases as the state approaches it.
Experiments on a synthetic dataset confirm that this state dependence is necessary not only for faithfully reproducing the heavy tails but also for improving overall sample quality.
As a result, our SDE-based sampler improves tail fidelity relative to a Gaussian baseline and achieves performance comparable to the existing $t$-ODE-based sampler.

\subsection{Limitations and Future Work}
Our experimental evaluation is currently limited to a synthetic one-dimensional setting, and the degrees-of-freedom parameter $\nu$ used in sampling is selected through a small sweep.
Extending both the analysis and the empirical validation to higher-dimensional benchmarks (e.g., images) and developing principled methods for selecting $\nu$ are promising directions for future work.

\section*{Acknowledgements}
This work is supported by JST SPRING, Grant Number A94251400049 and JSPS KAKENHI, Grant Numbers JP 24K21518, 25K03085.

\bibliographystyle{IEEEtran}
\bibliography{IEEE_ref}

\end{document}